\documentclass{article}
\usepackage{microtype}
\usepackage{graphicx}
\usepackage{subcaption}
\usepackage{booktabs}
\usepackage{iclr2020_conference, times}
\iclrfinalcopy
\usepackage{hyperref}
\graphicspath{{figures/}}
\usepackage{amsmath,amsthm,amssymb}
\usepackage{thmtools, thm-restate}
\newtheorem{theorem}{Theorem}
\newtheorem{lemma}{Lemma}
\usepackage{bm}
\newcommand{\vect}[1]{\boldsymbol{\mathbf{#1}}}


\usepackage[ruled,vlined]{algorithm2e}

\begin{document}

\title{Stochastic gradient algorithms from ODE splitting perspective}

\author{Daniil Merkulov \& Ivan Oseledets \\
Center for Computational and Data-Intensive Science and Engineering\\
Skolkovo Institute of Science and Technology\\
Bolshoy Boulevard 30, bld. 1, Moscow, Russia, 121205 \\
\texttt{daniil.merkulov@skolkovotech.ru, i.oseledets@skoltech.ru}
}

\maketitle

\begin{abstract}
We present a different view on stochastic optimization, which goes back to the splitting schemes for approximate solutions of ODE. In this work, we provide a connection between stochastic gradient descent approach and first-order splitting scheme for ODE. We consider the special case of splitting, which is inspired by machine learning applications and derive a new upper bound on the global splitting error for it. We present, that the Kaczmarz method is the limit case of the splitting scheme for the unit batch SGD for linear least squares problem. We support our findings with systematic empirical studies, which demonstrates, that a more accurate solution of local problems leads to the stepsize robustness and provides better convergence in time and iterations on the softmax regression problem.
\end{abstract}

\section{Introduction}
A lot of practical problems arising in machine learning require minimization of a finite sample average which can be written in the form
\begin{equation}\label{strang:finitesum}
    f(\vect{\vect{\theta}}) = \frac{1}{n} \sum_{i=1}^n f_i(\vect{\vect{\theta}}) \rightarrow \min_{\vect{\vect{\theta}} \in \mathbb{R}^p},
\end{equation}
where the sum goes over the \emph{minibatches} of the original dataset. Vanilla stochastic gradient descent
(SGD) method \cite{robbins1951stochastic} consists sequential steps in the direction of the gradient of $f_i(\vect{\theta})$, where $i$ is to be chosen randomly from $1$ to $n$ without replacement.
\begin{equation}
\vect{\theta}_{k+1} = \vect{\theta}_{k} - h_{k} \nabla f_i.
\end{equation}
Gradient descent method \cite{cauchy1847methode} can be considered as an Euler discretization of the ordinary
differential equation (ODE) of the form of the gradient flow
\begin{equation}\label{strang:euler}
    \frac{d \vect{\theta}}{d t} = -\nabla f(\vect{\theta}).
\end{equation}
In continuous time, SGD if often analyzed by introducing noise into the right-hand side of \eqref{strang:euler}. However, for a real dataset, the distribution of the noise obtained by replacing the full gradient by its minibatch variant is not known and can be different for different problems. Instead, we propose a new view on the SGD as a \emph{first-order splitting scheme} for \eqref{strang:euler}, thus shedding a new light on SGD-type algorithms. This representation allows using more efficient local problem solvers for the approximation of the full gradient flow.

\textbf{Contributions}
\begin{itemize}
    \item We show, that vanilla SGD could be considered as a splitting scheme for a full gradient flow and highlight connection between learning rate, batch size and size of the approximation step of SGD in continuous time.
    \item  We propose new optimization scheme, which uses numerical integration of simple ODE at each step instead of stochastic gradient calculation and show empirically, that such approach can be considered as a stepsize-robust alternative to SGD for some practical ML problems.
    \item We present, that the Kaczmarz method is the limit case of the splitting scheme for the unit batch SGD for linear least squares problem.
\end{itemize}


\section{SGD as a splitting scheme}

We firstly consider simple ODE, where we can apply splitting idea and corresponding minimization problem. The best example to start from is simple ODE with right-hand-side, consisting of two summands:
\begin{equation}
    \frac{d \vect{\theta}}{d t} = - \frac{1}{2} \left( g_1(\vect{\theta}) + g_2(\vect{\theta})\right)
    \label{strang:gradientflow}
\end{equation}
Suppose, we want to find the solution $\vect{\theta}(h)$ of \eqref{strang:gradientflow} via integrating it on the small timestep $h$. The first order splitting scheme defined by solving first $\frac{d \vect{\theta}}{d t} = - \frac{1}{2} g_1(\vect{\theta}), \quad \vect{\theta}(0) = \vect{\theta}_0$ with exact solution $\vect{\theta}_1(h)$ at the moment $h$, followed by $\frac{d \vect{\theta}}{d t} = - \frac{1}{2} g_2(\vect{\theta}), \quad \vect{\theta}(0) = \vect{\theta}_1(h)$ with exact solution $\vect{\theta}_2(h)$ at the moment $h$. Thus, the first order approximation could be written as a combinations of both solutions $\vect{\theta}^I(h) = \vect{\theta}_2(h) \circ \vect{\theta}_1(h) \circ \vect{\theta}_0$.

It is interesting to study how the pure splitting scheme \cite{marchuk1968some,strang1968construction} corresponds to the SGD approach. For this purpose, we consider an illustrative example of Gradient Flow equation \ref{strang:simple_GF}, where the right-hand side of ODE is just the sum of operators acting on $\vect{\theta}$, which allows us to apply splitting scheme approximation directly.
\begin{equation}
\label{strang:simple_GF}
\frac{d \vect{\theta}}{d t} = -\frac{1}{2} \sum\limits_{i=1}^2 \nabla f_i (\vect{\theta}) = - \frac{1}{2} \nabla f_1 (\vect{\theta}) -\frac{1}{2}  \nabla f_2 (\vect{\theta})
\end{equation}
\begin{table}[h!]
\caption{The table describes the correspondence between splitting scheme for discretized Gradient Flow ODE and epoch of SGD}
\resizebox{\textwidth}{!}{
\begin{tabular}{cccc}
\toprule
\textbf{Splitting step} & \textbf{Euler discretization} & \textbf{SGD Epoch} & \textbf{First-order splitting} \\
\midrule
$\frac{d \vect{\theta}}{d t} = -\frac{1}{2}\nabla f_1(\vect{\theta})$ & $\tilde{\vect{\theta}}_{I} = \vect{\theta}_0 - \frac{h}{2}\nabla f_1 (\vect{\theta}_0) $&$\tilde{\vect{\theta}}_{SGD} = \vect{\theta}_0 - h \nabla f_1 (\vect{\theta}_0) $&$\tilde{\vect{\theta}}_{I} = \vect{\theta}_0 - \frac{h}{2}\nabla f_1 (\vect{\theta}_0)$ \\
$\frac{d \vect{\theta}}{d t} = -\frac{1}{2}\nabla f_2(\vect{\theta}) $&$\vect{\theta}_{I} = \tilde{\vect{\theta}}_{I} - \frac{h}{2}\nabla f_2 (\tilde{\vect{\theta}}_{I}) $&$\vect{\theta}_{SGD} = \tilde{\vect{\theta}}_{SGD} - h \nabla f_2 (\tilde{\vect{\theta}}_{SGD}) $&$\vect{\theta}_{I} = \tilde{\vect{\theta}}_{I} - \frac{h}{2}\nabla f_2 (\tilde{\vect{\theta}}_{I})$ \\
\bottomrule
\end{tabular}
}
\end{table}

Thus, we can conclude, that \textit{one epoch of SGD is just the splitting scheme for the discretized Gradient Flow ODE with $2 \cdot h$ step size ($m \cdot h$ in case of $m$ batches)}

Indeed, in SGD we go in the direction of the batch gradient, which stands for the Euler discretization of batch gradient flow ODE or \emph{local ODE}. This idea gives additional intuition on the method. Given information about the Euler scheme limitation (first-order accuracy, stability issues), we propose to solve each local problem more precisely.

\section{Optimization step with ODE solver}

We propose to integrate local problem more precisely instead of Euler step in SGD. Solution of the local ODE problem involves replacing gradient in the right-hand side of gradient flow ODE \ref{strang:gradientflow} with batch gradient version. In our experiments the explicit Runge-Kutta method \cite{dormand1980family,shampine1986some} was used via scipy \cite{2020SciPy-NMeth} function odeint.

\begin{table}[h!]
\caption{The table presents ODE, which we need to solve at each step of the algorithm. The last column shows the ODE, which is needed to be solved at each iteration of the algorithm for each given problem.}
\resizebox{\textwidth}{!}{
\begin{tabular}{cccc}
\toprule
\textbf{Problem} & \textbf{Loss function} & \textbf{Batch gradient} & \textbf{Initial local ODE} \\
\midrule
Linear Least Squares & $f(\vect{\theta}) = \frac{1}{n}\sum\limits_{i=1}^m\Vert X_i \vect{\theta} - \vect{y_i} \Vert_2^2$ & $\frac{1}{b}X_i^\top( X_i \vect{\theta} - \vect{y_i})$ & $\frac{d \vect{\theta}}{d t} = - \frac{1}{n} X_i^\top( X_i \vect{\theta} - \vect{y_i})$ \\
Binary logistic regression & $\begin{aligned}f(\vect{\theta}) = -\frac{1}{n} \sum_{i=1}^n\left(y_i \ln \sigma(\vect{\theta}^\top\vect{x_i})  \right.&+ \\ \left.+ (1-y_i) \ln \left(1-\sigma(\vect{\theta}^\top\vect{x_i})\right)\right)&\end{aligned}$ & $\frac{1}{b}X_i^\top\left( \sigma\left(X_i \vect{\theta}\right) - \vect{y_i}\right)$ & $\frac{d \vect{\theta}}{d t} = - \frac{1}{n} X_i^\top\left( \sigma\left(X_i \vect{\theta}\right) - \vect{y_i}\right)$ \\
One FC Layer + softmax & $f(\Theta) = 
-\frac{1}{n} \sum\limits_{i=1}^n\log\left(\frac{\vect{y_i}^\top e^{\Theta^\top \vect{x_i}}}{\vect{1}^\top e^{\Theta^\top \vect{x_i}}}\right)$ & $ \frac{1}{b} X_i^\top\left(s(\Theta^\top X_i^\top) - Y_i \right)^\top$ & $\frac{d \Theta}{d t} = - \frac{1}{n} X_i^\top\left(s(\Theta^\top X_i^\top) - Y_i \right)^\top$\\
\bottomrule
\end{tabular}
}
\end{table}

\begin{algorithm}[h!]
\SetAlgoLined
$\vect{\theta}_0$ - initial parameter; $b$ - batch size; $\alpha$ - learning rate; $m$- total number of batches

$h := \alpha m$

$t := 0$

\For{$k = 0,1, \ldots$}{
    \For{$i = 1,2, \ldots, m$}{
        Formulate local ODE problem $\mathcal{P}_i^k$

        $\vect{\theta}_{t+1} = $ integrate $\mathcal{P}_i^k$ given an initial value $\vect{\theta}(0) = \vect{\theta}_t$ to the step h

        $t := t+1$ 
        }
    }
\caption{Splitting optimization}
\end{algorithm}

Typical machine learning problems involves dealing with mini-batch of size $b$, which is often less, than the number of trainable parameters $p$, which allows us to reduce dimensionality of the dynamic system via $QR$ decomposition of each batch data matrix $X_i^\top = Q_i R_i$ (see details in the Appendix) and substitution $\vect{\eta}_i = Q_i^\top \vect{\theta}$. Note, that $QR$ decomposition is only needed to be performed once before the training.

\begin{table}[h!]
\caption{The table shows initial local ODE and paired $\mathcal{P}_i^k$. Note, that $\vect{\eta}_i \in \mathbb{R}^b$ , while $\vect{\theta} \in \mathbb{R}^p$}
\resizebox{\textwidth}{!}{
\begin{tabular}{llc}
\toprule
\multicolumn{1}{c}{\textbf{Initial local ODE}} & \multicolumn{1}{c}{$\mathcal{P}_i^k$} & \textbf{Integration} \\
\midrule
$\frac{d \vect{\theta}}{d t} = - \frac{1}{n} X_i^\top( X_i \vect{\theta} - \vect{y_i})$ & $\frac{d \vect{\eta_i}}{d t} = - \frac{1}{n} R_i\left(R_i^\top \vect{\eta_i} - \vect{y_i}\right), \vect{\eta_i} = Q_i^\top\vect{\theta}$ & analytical \\
$\frac{d \vect{\theta}}{d t} = - \frac{1}{n} X_i^\top\left( \sigma\left(X_i \vect{\theta}\right) - \vect{y_i}\right)$ & $\frac{d \vect{\eta_i}}{d t} = - \frac{1}{n} R_i\left(\sigma\left(R_i^\top \vect{\eta_i}\right) - \vect{y_i}\right), \vect{\eta_i} = Q_i^\top\vect{\theta}$ & \texttt{odeint} \\
$\frac{d \Theta}{d t} = - \frac{1}{n} X_i^\top\left(s(\Theta^\top X_i^\top) - Y_i \right)^\top$ & $\frac{d H_i}{dt} = - \frac{1}{n} R_i(s(H_i^\top R) - Y_i)^\top, H_i = Q_i^\top \Theta $& \texttt{odeint} \\
\bottomrule
\end{tabular}
}
\end{table}

There is an analytical solution for each local ODE in linear least squares case:

\begin{restatable}{theorem}{llsls}\label{strang:LLS_local_solution} For any matrix $\vect{x_i} \in \mathbb{R}^{b \times p}, b \leq p, \text{rank}X_i = b$, any vector of right-hand side $\vect{y_i} \in \mathbb{R}^{b}$ and initial vector of parameters $\vect{\theta}_0$, there is a solution of the $\frac{d \vect{\theta}}{d t} = - \frac{1}{n} X_i^\top( X_i \vect{\theta} - \vect{y_i})$, given by formula:
\begin{equation}\label{strang:LLS_local_solution_formula}
\vect{\theta}(h) = Q_i e^{-\frac{1}{n}R_iR_i^\top h} \left( Q_i^\top \vect{\theta}_0 - R_i^{-\top}\vect{y_i}\right) + Q_iR_i^{-\top}\vect{y_i} + (I - Q_iQ_i^\top)\vect{\theta}_0,
\end{equation}
where $Q_i \in \mathbb{R}^{p \times b}$ and $R_i \in \mathbb{R}^{b \times b}$ stands for the $QR$ decomposition of the matrix $\vect{X_i}^\top$, $\vect{X_i}^\top = Q_i R_i$.
\end{restatable}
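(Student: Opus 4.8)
The plan is to treat the local ODE as a linear inhomogeneous system with constant coefficients. Writing $A = -\frac{1}{n}X_i^\top X_i$ and $\vect{c} = \frac{1}{n}X_i^\top\vect{y_i}$, the equation reads $\frac{d\vect{\theta}}{dt} = A\vect{\theta} + \vect{c}$. The difficulty is that, since $b \le p$, the matrix $A$ has rank $b$ and is generally singular, so one cannot simply form the particular solution $-A^{-1}\vect{c}$ and write $\vect{\theta}(t) = e^{At}(\vect{\theta}_0 + A^{-1}\vect{c}) - A^{-1}\vect{c}$. First I would use the QR factorization $X_i^\top = Q_iR_i$ to expose this rank structure: since $X_i^\top X_i = Q_iR_iR_i^\top Q_i^\top$ and $X_i^\top\vect{y_i} = Q_iR_i\vect{y_i}$, both $A$ and $\vect{c}$ live in the column space of $Q_i$, which is exactly the range of $A$.

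The main step is to decouple the dynamics along $\mathrm{range}(Q_i)$ from its orthogonal complement. Applying $(I - Q_iQ_i^\top)$ to the ODE and using $(I - Q_iQ_i^\top)Q_i = 0$ shows $\frac{d}{dt}(I - Q_iQ_i^\top)\vect{\theta} = 0$, so the orthogonal-complement component is frozen at its initial value $(I - Q_iQ_i^\top)\vect{\theta}_0$. For the in-range part I set $\vect{\eta_i} = Q_i^\top\vect{\theta}$; left-multiplying the ODE by $Q_i^\top$ and using $Q_i^\top Q_i = I_b$ together with $X_i\vect{\theta} = R_i^\top\vect{\eta_i}$ reduces it to the $b$-dimensional system $\mathcal{P}_i^k$, namely $\frac{d\vect{\eta_i}}{dt} = -\frac{1}{n}R_i(R_i^\top\vect{\eta_i} - \vect{y_i})$. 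Because $\mathrm{rank}\,X_i = b$, the matrix $R_i$ is invertible and $R_iR_i^\top$ is symmetric positive definite, so this reduced system has the unique equilibrium $\vect{\eta_i}^\ast = (R_iR_i^\top)^{-1}R_i\vect{y_i} = R_i^{-\top}\vect{y_i}$ and the standard closed form $\vect{\eta_i}(t) = e^{-\frac{1}{n}R_iR_i^\top t}(Q_i^\top\vect{\theta}_0 - R_i^{-\top}\vect{y_i}) + R_i^{-\top}\vect{y_i}$. Reassembling $\vect{\theta}(t) = Q_i\vect{\eta_i}(t) + (I - Q_iQ_i^\top)\vect{\theta}_0$ yields formula \eqref{strang:LLS_local_solution_formula}.

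I expect the only genuinely delicate point to be the rank-deficiency of $A$: one must check that $\vect{c}$ lies in $\mathrm{range}(A)$ so that an equilibrium exists within that subspace, and that the remaining kernel directions contribute a constant term rather than secular growth — both of which the QR splitting makes transparent. As an independent cross-check I would verify the claimed formula directly: substituting $t = 0$ and cancelling the $\pm Q_iR_i^{-\top}\vect{y_i}$ and $\pm Q_iQ_i^\top\vect{\theta}_0$ terms recovers $\vect{\theta}_0$, and differentiating in $t$ (only the exponential term survives) and comparing with $-\frac{1}{n}Q_iR_i(R_i^\top Q_i^\top\vect{\theta}(t) - \vect{y_i})$ reproduces the right-hand side, where the identities $Q_i^\top Q_i = I_b$ and the commutation of $R_iR_i^\top$ with $e^{-\frac{1}{n}R_iR_i^\top t}$ do all the work.
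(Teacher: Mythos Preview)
Your proposal is correct and follows essentially the same route as the paper: split $\vect{\theta}$ into its $Q_i$-component $\vect{\eta_i}=Q_i^\top\vect{\theta}$ and the frozen orthogonal part $(I-Q_iQ_i^\top)\vect{\theta}_0$, solve the nonsingular $b$-dimensional system for $\vect{\eta_i}$ via the equilibrium $R_i^{-\top}\vect{y_i}$ and a matrix exponential, and reassemble. Your write-up is in fact slightly tidier than the paper's (you invoke the rank hypothesis explicitly to justify $R_i^{-\top}$ and add the $t=0$/derivative cross-checks), but the underlying argument is the same.
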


It is interesting to mention, that the splitting approach immediately leads to the Kaczmarz \cite{kaczmarz1937method, strohmer2009randomized, gower2015randomized} method for solving linear system in the same setting with unit batch size.

\begin{equation}
\label{strang:splitting_limit_kaczmarz}
\lim_{h \to \infty} \vect{\theta}(h) = \frac{\left(y_i - \vect{x_i}^\top\vect{\theta}_0 \right)}{\|\vect{x_i}\|^2} \vect{x_i} + \vect{\theta}_0,
\end{equation}

which is exact formula for Kaczmarz method for solving linear system. This result correlates with the statements of \cite{needell2014stochastic}, but provides us with a new sense of similarity between SGD and Kaczmarz method.


\section{Results}

In this section, we describe the experimental setting. The majority of computations were performed on the NVIDIA DGX-2 cluster with 80 CPUs and 512 Gb RAM. We restricted the number of CPU usage per each experiment with an upper limit of 5 CPUs per experiment. All time measurements were done with the time library for Python. All experiments were done with the fixed random seed for reproducibility. For each experiment we performed 30 runs with random initialization and plotted trend line with the standard deviation.

\textbf{Linear Least Squares}  Both random and the real linear systems were tested. For random linear system (\texttt{random lls}) we generated $10000\times500$ matrix with additive Gaussian noise of magnitude $0.01$. Presented figures correspond to the batch size equals to $20$. The real linear system (\texttt{tom lls}) is the standard tomography data from AIRTools II \cite{hansen2018air}. Solution of the linear system is the $50 \times 50$ image reconstructed from solving $12780 \times 2500$ linear system. Presented figures correspond to the batch size equals to $60$. Relative error $10^{-3}$ was used as the stopping criterion.

\textbf{Binary Logistic Regression}  (\texttt{logreg}) In our experiments we used two classes from MNIST \cite{lecun1998gradient} dataset, which corresponds to the $0$ and $1$ digits. The size of the batch for presented figure is $50$. Test error $0.001$ was used as the stopping criterion.

\textbf{Softmax Logistic Regression} (\texttt{softmax}) We took Fashion MNIST \cite{xiao2017fashion} dataset with $60000$ grayscale pictures from $10$ classes. Each example is $28 \times 28$ image. The size of the batch for presented figure is $64$. Test error $0.25$ was used as the stopping criterion.

On the figures below we have two labels: \texttt{SGD} and \texttt{Splitting}, which stands for batch stochastic gradient descent and proposed algorithm. We use different constant learning rates to perform our experiments. All the learning rates tested for both algorithms. Lack of point of one algorithm on the graph means reaching the limit of iterations without achieving the termination rule.

\begin{figure}[h!]
    \begin{subfigure}[b]{0.25\textwidth}
            \centering
            \includegraphics[width=\linewidth]{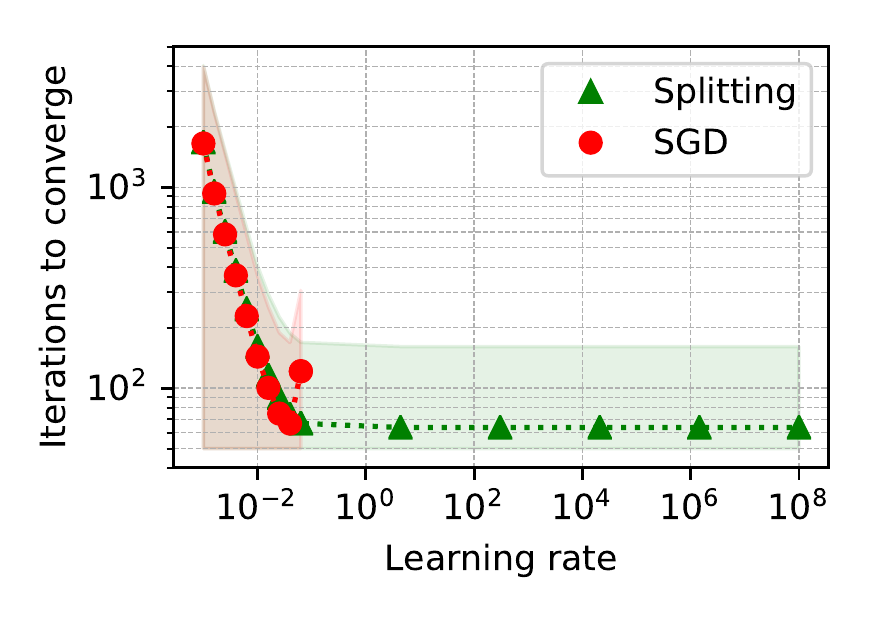}
            \caption{{\small \texttt{Random LLS}}}
    \end{subfigure}%
    \begin{subfigure}[b]{0.25\textwidth}
            \centering
            \includegraphics[width=\linewidth]{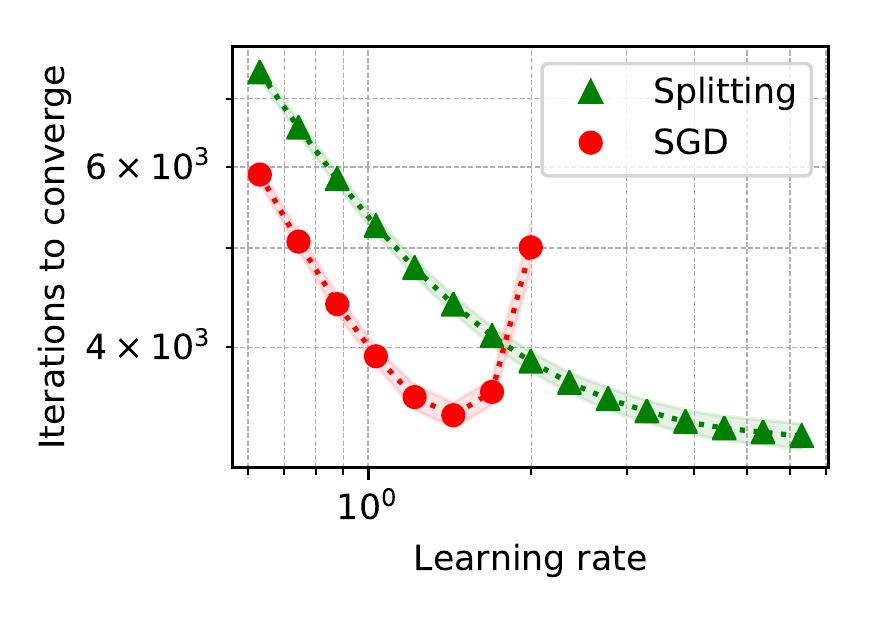}
            \caption{{\small \texttt{Tom LLS}}}
    \end{subfigure}%
    \begin{subfigure}[b]{0.25\textwidth}
            \centering
            \includegraphics[width=\linewidth]{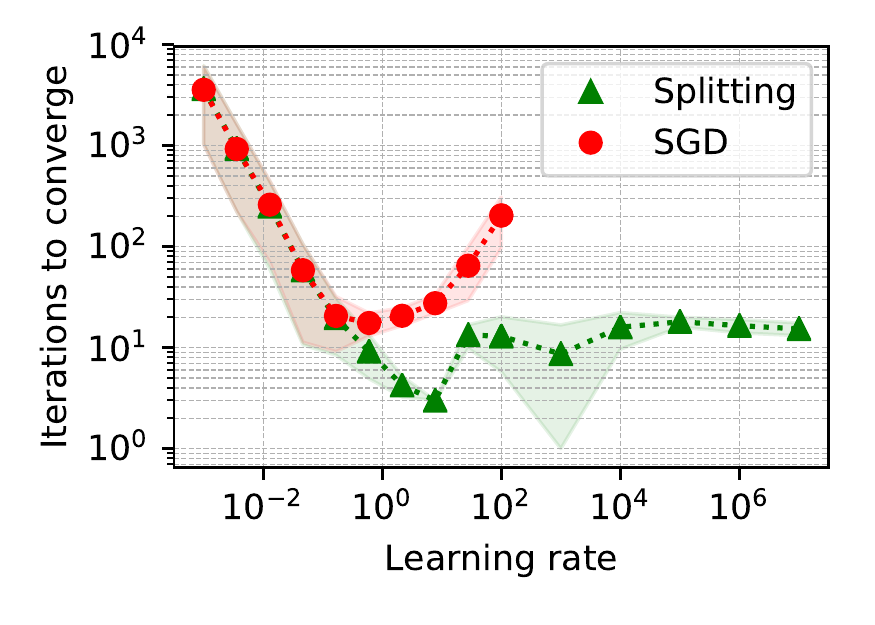}
            \caption{{\small \texttt{LogReg}}}
    \end{subfigure}%
    \begin{subfigure}[b]{0.25\textwidth}
            \centering
            \includegraphics[width=\linewidth]{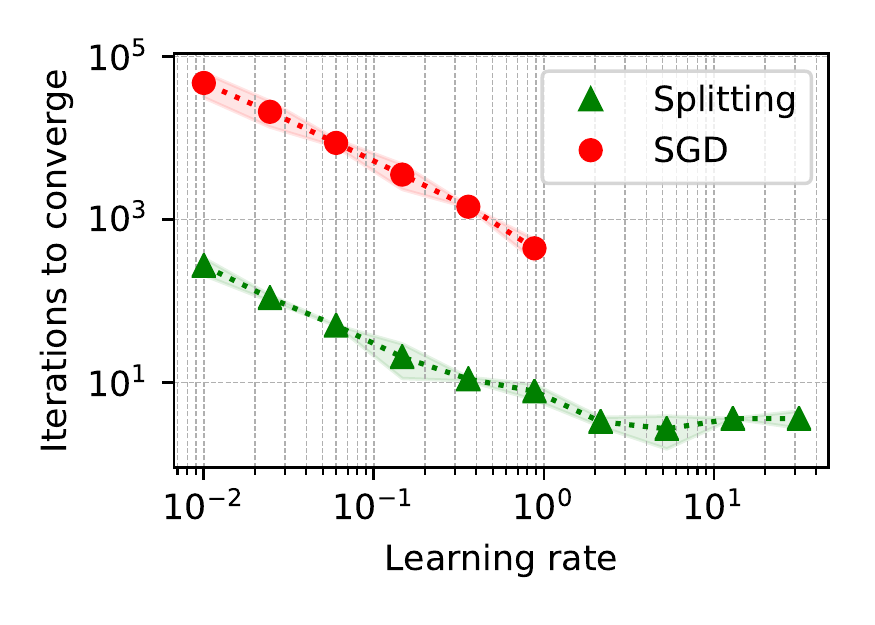}
            \caption{{\small \texttt{Softmax}}}
    \end{subfigure}
\end{figure}

As it is expected, SGD diverges starting from some value of learning rate, which is specific for each problem. While we can see comparative robustness of the proposed splitting optimization approach.

\begin{figure}[h!]
    \begin{subfigure}[b]{0.25\textwidth}
            \centering
            \includegraphics[width=\linewidth]{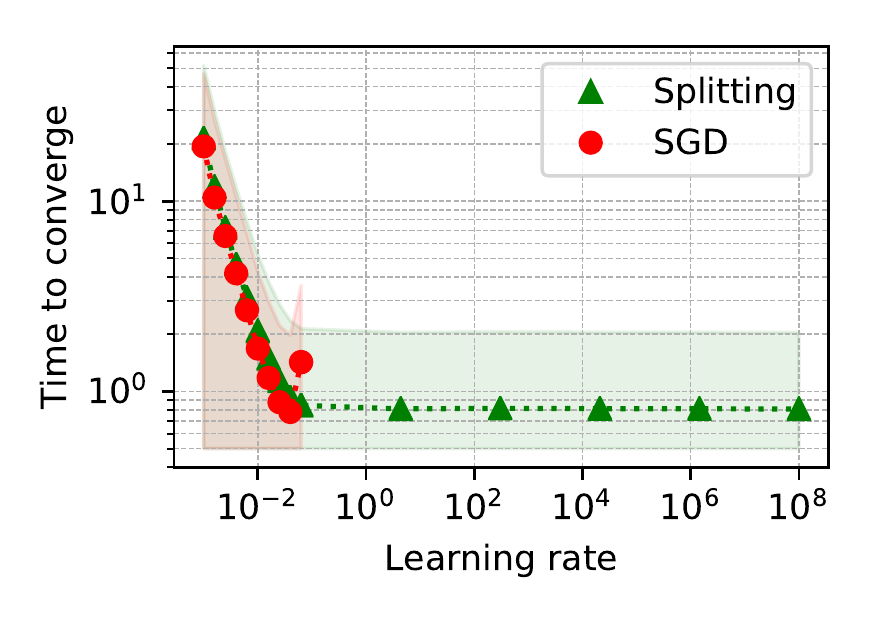}
            \caption{{\small \texttt{Random LLS}}}
    \end{subfigure}%
    \begin{subfigure}[b]{0.25\textwidth}
            \centering
            \includegraphics[width=\linewidth]{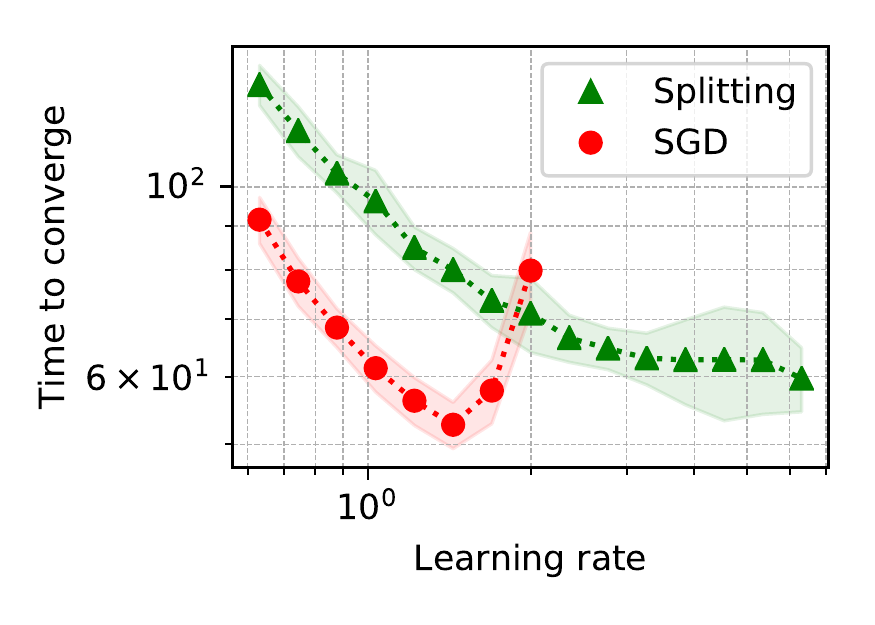}
            \caption{{\small \texttt{Tom LLS}}}
    \end{subfigure}%
    \begin{subfigure}[b]{0.25\textwidth}
            \centering
            \includegraphics[width=\linewidth]{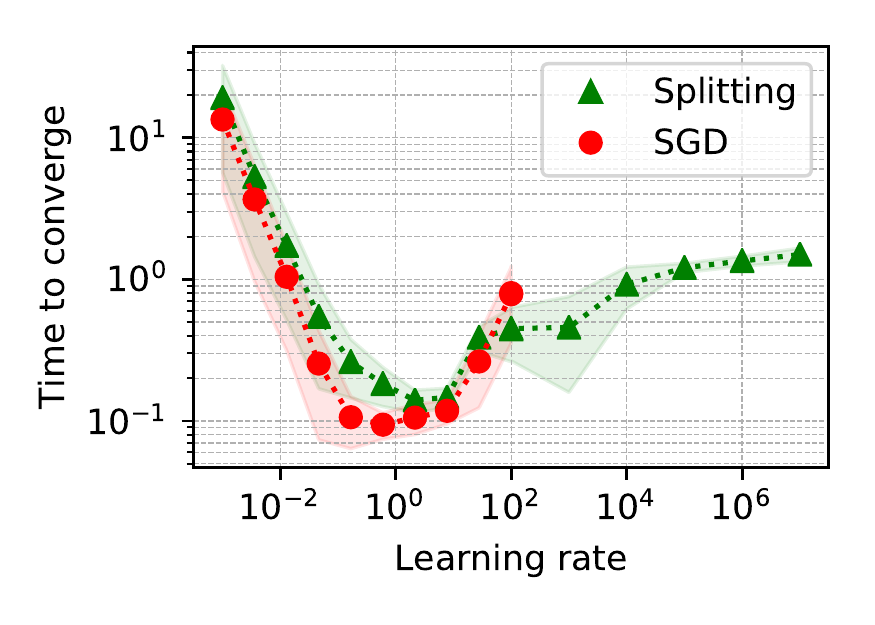}
            \caption{{\small \texttt{LogReg}}}
    \end{subfigure}%
    \begin{subfigure}[b]{0.25\textwidth}
            \centering
            \includegraphics[width=\linewidth]{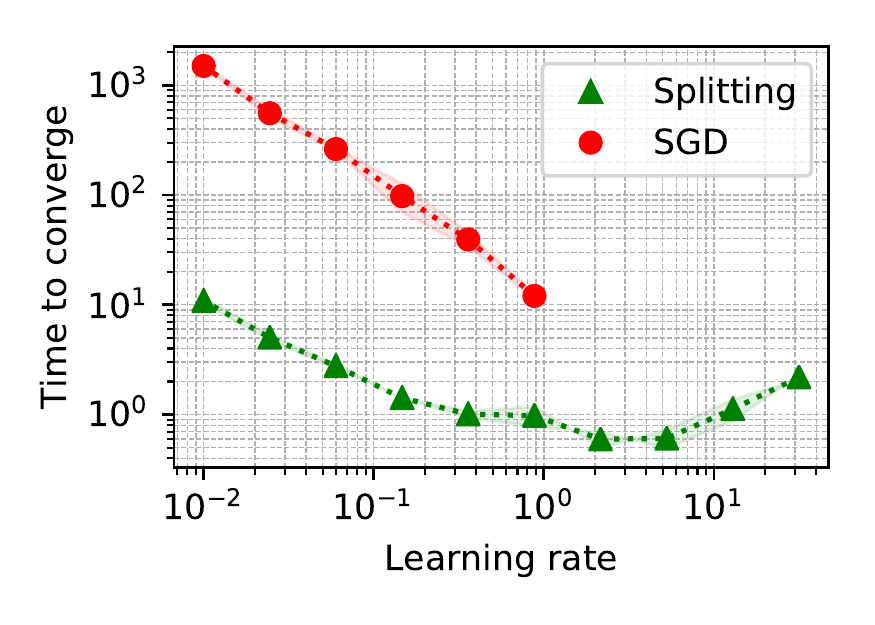}
            \caption{{\small \texttt{Softmax}}}
    \end{subfigure}
\end{figure}


\section{Related work}
In this work, we presented another point of view on the nature of stochasticity in the stochastic gradient algorithms. From this perspective, different splitting schemes yield different stochastic gradient algorithms. We focused on the first-order splitting scheme for ODE, which corresponds to the SGD with the constant learning rate. Given this tractable setting, we performed a systematic empirical study of the local problem integration influence on the quality of the approximation scheme in machine learning problems. While the question of using these ideas to make general-purpose optimizer remains open, splitting optimization approach showed itself quite robust to the hyperparameter tuning for particular practical problems. Appendix to the paper contains proofs of the theorems and a new global error upper bounds for the first-order splitting for the special case.
In \cite{su2014differential}  authors introduced second order ODE, which is equivalent (in the limit sense) to the gradient descent with Nesterov momentum \cite{nesterov1983method}.  Generalization of these ideas was presented in \cite{wibisono2016variational} with an arbitrary polynomial acceleration using the same parameter in ODE. General overview of the interplay between continuous-time and discrete-time points of view on dynamical systems and iterative optimization methods is covered in \cite{helmke2012optimization}, \cite{evtushenko1994stable}


\newpage
{ \small
\bibliography{biblio}}
\bibliographystyle{iclr2020_conference}


\appendix


\section{Upper bound on the global splitting error}

Suppose, that we have only two batches, and the problem \eqref{strang:LLS} is consistent, i.e. there exists an exact solution $\vect{\theta}_*$ such as $X \vect{\theta}_* = \vect{y}$. The GD flow has the form
\begin{equation}\label{strang:model1}
\begin{split}
    \frac{d \vect{\theta}}{d t} &= -X^{\top} (X \vect{\theta} - \vect{y}) = -X^{\top} X(\vect{\theta} - \vect{\theta}_*) =\\ &= -(X_1^{\top} X_1 + X^{\top}_2 X_2)(\vect{\theta} - \vect{\theta}_*),
\end{split}
\end{equation}
i.e. the splitting scheme corresponds to a linear operator splitting

\begin{equation*}
A = A_1 + A_2, \; A = -X^{\top} X, \; A_i = -X^{\top}_i X_i, \; i = 1, 2.
\end{equation*}

Both $A_1$ and $A_2$ are symmetric non-negative definite matrices. Without loss of generality, we can assume that $\vect{\theta}_* = 0$,

Suppose that the rank of $A$ is $r_1$ and the rank of $A_2$ is $r_2$. Then, we can write them as
\begin{equation*}
A_i = Q_i B_i Q^*_i,
\end{equation*}
where $Q_i$ is an $N \times r_i$ matrix with orthonormal columns.  The following Lemma gives the representation of the matrix exponents of such matrices.
\begin{lemma}\label{strang:lemexp}
Let $A = Q B Q^*,$ where $Q$ is an $N \times r$ matrix with orthonormal columns, and $B$ is an $r \times r$ matrix. Then,
\begin{equation}\label{strang:lrexp}
    e^{t A}  = (I - QQ^*) + Q e^{t B} Q^*.
\end{equation}
\end{lemma}
To prove \eqref{strang:lrexp} we note that
\begin{equation*}
\begin{split}
e^{t A} &= \sum_{k=0}^{\infty} \frac{t^k A^k}{k!} = \sum_{k=0}^{\infty} \frac{t^k Q B^k Q^*}{k!} = \\ &= I - QQ^* + QQ^* + Q \sum_{k=1}^{\infty} \frac{t^k B^k}{k!} Q^* = \\ &= (I - QQ^*) + Q e^{t B} Q^*.
\end{split}
\end{equation*}

\begin{lemma}
    \label{strang:lemupper_2}
    Let $A_1, A_2 \in \mathbb{S}^p_{+}$ be the square negative semidefinite matrices, that don't have full rank, i.e. $\operatorname{rank}{A_1} \leq p$ and $\operatorname{rank}{A_2} \leq p$. While the sum of those matrices has full rank, i.e. $A = A_1 + A_2, \operatorname{rank}{A} = p$. Then, the global upper bound error will be written as follows:

    \begin{equation}\label{strang:lemupper}
        \lim_{t \to \infty}\| e^{A_2t}e^{A_1t} - e^{At}\| = \|(I - Q_2Q_2^*)(I - Q_1Q_1^*)\|
    \end{equation}
\end{lemma}
\begin{proof}
    The proof is straightforward. We will use the low rank matrix exponential decomposition from the Lemma \ref{strang:lemexp}
    $$
    e^{A_it} = \Pi_i + Q_i e^{B_it} Q_i^*, \text{where } \Pi_i = I - Q_iQ_i^*; i = 1,2
    $$
    \begin{align*}
    &\lim_{t \to \infty}\| e^{A_2t}e^{A_1t} - e^{At}\| = \\ 
    &= \lim_{t \to \infty}\| (\Pi_2 + Q_2 e^{B_2t} Q_2^*)(\Pi_1 + Q_1 e^{B_1t} Q_1^*) - e^{At}\| = \\
    &= \lim_{t \to \infty}\| \Pi_2\Pi_1 + Q_1 e^{B_1t} Q_1^*\Pi_2 + \Pi_1Q_2 e^{B_2t} Q_2^* + \\
    &+  Q_1 e^{B_1t} Q_1^* Q_2 e^{B_2t} Q_2^* - e^{At}\| =\\&= \Pi_2 \Pi_1
    \end{align*}
    Since all matrices $B_1, B_2, A$ are negative all the matrix exponentials are decaying: $\|e^{At}\|\leq e^{t\mu (A)}\, \forall t\geq 0$, where $\mu(A) = \lambda_{max} \left( \frac{A + A^\top}{2}\right)$ - the logarithmic norm.
\end{proof}

\begin{figure}[h!]
    \begin{subfigure}[t]{0.49\textwidth}
            \centering
            \includegraphics[width=\linewidth]{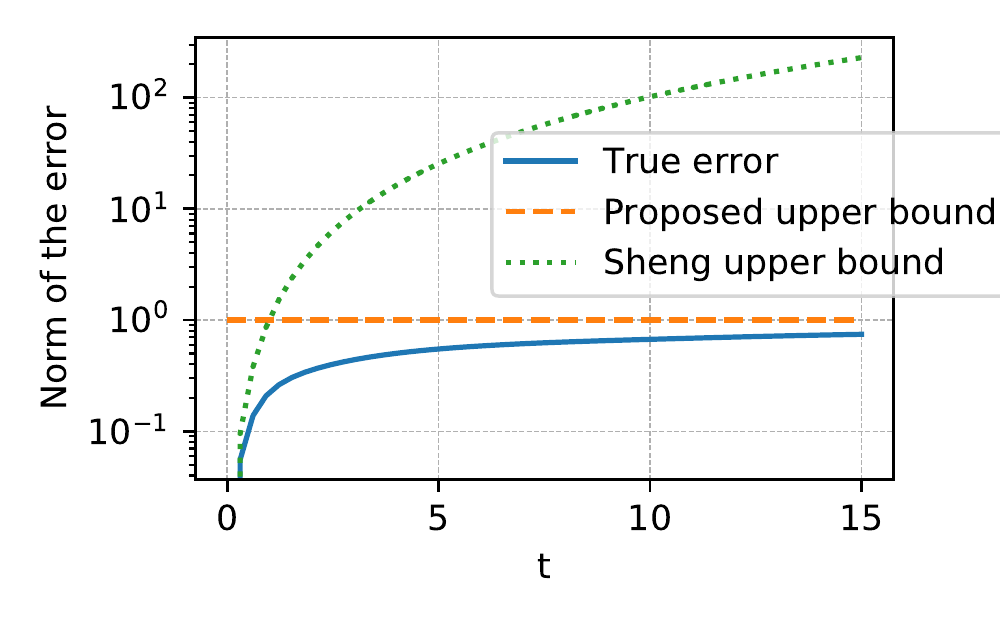}
            \caption{{\small Global error of the splitting scheme. Initial random full rank matrix $X \in \mathbb{R}^{100 \times 100}$ was splitted by rows. $X_1, X_2 \in \mathbb{R}^{50 \times 100}$. Target matrices were obtained the following way: $A_1 = -X_1^*X_1, A_2 = -X_2^*X_2, A = -X^*X$. So $A_1, A_2$ are negative and lacking full rank, while $A = A_1 + A_2$ has full rank.}}
            \label{strang:fig:upper_bound_2}
    \end{subfigure}%
    \hfill
    \begin{subfigure}[t]{0.49\textwidth}
            \centering
            \includegraphics[width=\linewidth]{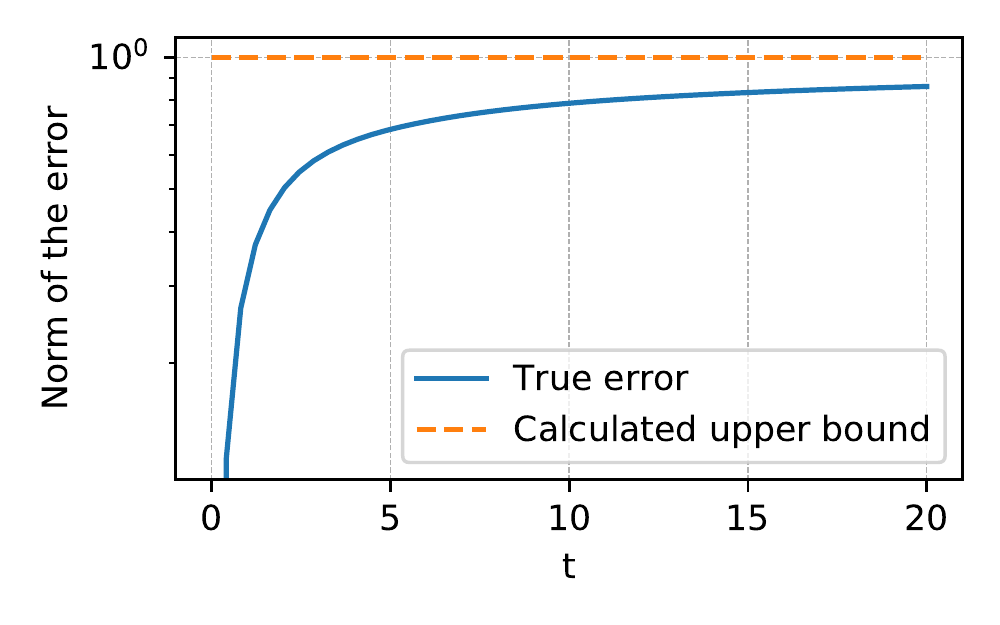}
            \caption{{\small Global upper bound on the splitting scheme in case of $40$ summands in the right-hand side.}}
            \label{strang:fig:upper_bound_many}
    \end{subfigure}
\end{figure}

The graph presented on the Figure \ref{strang:fig:upper_bound_2} describes . One can easily see significant difference between existing global upper bounds for that case \cite{sheng1994global} and derived upper bound.

\begin{theorem}\label{strang:theorem_uppbound}
    Let $A_1, A_2, \ldots, A_b \in \mathbb{S}^p_{+}$ be the square negative semidefinite matrices, that don't have full rank, i.e. $\operatorname{rank}{A_i} \leq p, \;\forall i = 1, \ldots, b$. While the sum of those matrices has full rank, i.e. $A = \sum\limits_{i=1}^b A_i, \operatorname{rank}{A} = p$. Then, the global upper bound error will be written as follows:

    \begin{equation}\label{strang:global_error_upper_bound}
        \lim_{t \to \infty}\| e^{A_bt} \cdot \ldots \cdot e^{A_1t} - e^{At}\| = \left\|\prod\limits_{i=1}^b \Pi_{b-i+1}\right\|,
    \end{equation}
    where $\Pi_i = I - Q_iQ_i^*$ and $A_i = Q_iB_iQ_i^*$ and $Q_i$ is a matrix with orthonormal columns. 
\end{theorem}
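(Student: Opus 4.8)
The plan is to mirror the two-matrix argument of Lemma~\ref{strang:lemupper_2}, but to organize the limit so that I never have to expand the full $2^b$-term product of the $e^{A_i t}$ by hand. The entire statement reduces to two facts: that each factor $e^{A_i t}$ converges to the projector $\Pi_i$ as $t \to \infty$, and that matrix multiplication (and the norm) is continuous, so that the limit of a product is the product of the limits, taken in the correct order.

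First I would record, directly from Lemma~\ref{strang:lemexp}, the low-rank decomposition $e^{A_i t} = \Pi_i + Q_i e^{B_i t} Q_i^*$ with $\Pi_i = I - Q_i Q_i^*$. The key observation is that each $B_i = Q_i^* A_i Q_i$ is a genuinely \emph{negative definite} $r_i \times r_i$ matrix: since $A_i$ is negative semidefinite of rank $r_i$ and the columns of $Q_i$ span its range, $B_i$ collects exactly the nonzero, hence strictly negative, eigenvalues of $A_i$. Using the logarithmic-norm bound already invoked in Lemma~\ref{strang:lemupper_2}, $\|e^{B_i t}\| \leq e^{t \mu(B_i)} \to 0$, and because $Q_i$ has orthonormal columns the residual term $Q_i e^{B_i t} Q_i^*$ has the same norm and also vanishes. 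Hence $\lim_{t\to\infty} e^{A_i t} = \Pi_i$ for every $i$.

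Next I would pass the limit through the ordered product. Writing $P_k(t) = e^{A_k t} \cdots e^{A_1 t}$, continuity of matrix multiplication gives $\lim_{t\to\infty} P_k(t) = \Pi_k \cdot \lim_{t\to\infty} P_{k-1}(t)$, so a one-line induction with base case $\lim_{t\to\infty} e^{A_1 t} = \Pi_1$ yields $\lim_{t\to\infty} e^{A_b t} \cdots e^{A_1 t} = \Pi_b \Pi_{b-1} \cdots \Pi_1 = \prod_{i=1}^b \Pi_{b-i+1}$. Here it is essential to keep the factors in order, since the projectors need not commute; the index reversal $b-i+1$ in the statement is exactly what matches the left-to-right order $e^{A_b t}, \ldots, e^{A_1 t}$ of the splitting sweep. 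Separately, $A = \sum_i A_i$ is negative semidefinite and, by hypothesis, of full rank $p$, hence negative definite, so the same logarithmic-norm estimate gives $e^{A t} \to 0$.

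Finally, combining the two limits and using continuity of the norm, I obtain $\lim_{t\to\infty}\|e^{A_b t}\cdots e^{A_1 t} - e^{A t}\| = \|\Pi_b \cdots \Pi_1 - 0\| = \|\prod_{i=1}^b \Pi_{b-i+1}\|$, which is the claim. I do not expect a serious obstacle, as this is a clean generalization of the $b=2$ lemma; the only points that require genuine care are verifying that each $B_i$ is strictly (not merely weakly) negative definite so that the residual terms actually decay, and respecting the non-commutativity of the $\Pi_i$ when ordering the product.
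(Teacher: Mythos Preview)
Your proposal is correct and follows the same strategy the paper uses for the two-matrix case in Lemma~\ref{strang:lemupper_2}; the paper does not actually supply a separate proof of Theorem~\ref{strang:theorem_uppbound}, it only states it and supports it with Figure~\ref{strang:fig:upper_bound_many}. Your argument is therefore not a deviation from the paper but rather the natural extension the paper leaves implicit: instead of expanding the product of $b$ factors $(\Pi_i + Q_i e^{B_i t}Q_i^*)$ term by term as in the $b=2$ proof, you use continuity of matrix multiplication and a one-line induction to pass the limit through the product, which is cleaner and scales to arbitrary $b$ without the $2^b$-term bookkeeping. The two points you flag---strict negative definiteness of each $B_i$ (because $Q_i$ has exactly $\operatorname{rank}A_i$ columns spanning the range of $A_i$) and the non-commutativity forcing the reversed ordering $\Pi_b\cdots\Pi_1$---are exactly the places where care is needed, and you handle both correctly.
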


The graph on the Figure \ref{strang:fig:upper_bound_many} shows empirical validity of the presented upper bound.


\section{Proofs}

\llsls*

\begin{proof}
Given $X_i^\top = Q_i R_i$, we have $(I - Q_iQ_i^\top)X_i^\top = 0$. Note, that $Q_i$ is left unitary matrix, i.e. $Q_i^\top Q_i = I$. 
\begin{align}
\nonumber \frac{d \vect{\theta}}{d t} &= - \frac{1}{n}X_i^\top(X_i \vect{\theta} - \vect{y_i}) \\
\nonumber (I - Q_iQ_i^\top)\frac{d \vect{\theta}}{d t} &= 0 \\
\nonumber \frac{d \vect{\theta}}{d t} &= Q_i\frac{d (Q_i^\top\vect{\theta})}{d t} \quad Q_i^\top \vect{\theta} = \vect{\eta_i}\\ 
\nonumber \frac{d \vect{\theta}}{d t} &= Q_i\frac{d \vect{\eta_i}}{d t} \quad \text{integrate from $0$ to $h$}\\ 
\label{strang:lls_theorem_theta_from_eta}\vect{\theta}(h) &= Q_i \left(\vect{\eta_i}(h) - \vect{\eta_i}(0) \right) + \vect{\theta}_0
\end{align}

On the other hand:
\begin{align}\nonumber
\frac{d \vect{\eta_i}}{d t} &= Q_i^\top\frac{d \vect{\theta}}{d t} =  - \frac{1}{n} Q_i^\top X_i^\top(X_i \vect{\theta} - \vect{y_i}) = \\ 
\nonumber&= - \frac{1}{n} Q_i^\top  Q_i R_i( R_i^\top Q_i^\top \vect{\theta} - \vect{y_i}) =\\
&= - \frac{1}{n} \left( R_iR_i^\top \vect{\eta_i} - R_i \vect{y_i}\right) \label{strang:lls_theorem_eta_from_theta}
\end{align}

Consider the moment of time $t = \infty$. $\frac{d \vect{\eta_i}}{d t} = 0$, since $\exists \vect{\theta}^*, Q_i^\top \vect{\theta}^* = \vect{\eta_i}^*$. Also consider \eqref{strang:lls_theorem_eta_from_theta}:

\begin{equation}\label{strang:lls_theorem_eta_star}
\begin{split}
\frac{d \vect{\eta_i}}{d t} = 0 &= - \frac{1}{n} \left( R_iR_i^\top \vect{\eta_i}^* - R_i \vect{y_i}\right) \\ R_i \vect{y_i} &= R_iR_i^\top \vect{\eta_i}^*
\end{split}
\end{equation}

Now we look at the \eqref{strang:lls_theorem_eta_from_theta} with the replacement, given in \eqref{strang:lls_theorem_eta_star}:

\begin{align*}
\frac{d \vect{\eta_i}}{d t} &= - \frac{1}{n} \left( R_iR_i^\top \vect{\eta_i}- R_iR_i^\top \vect{\eta_i}^*\right) \\
\frac{d \vect{\eta_i}}{d t} &= - \frac{1}{n} R_iR_i^\top \left( \vect{\eta_i}- \vect{\eta_i}^*\right) \qquad \text{integrate from $0$ to $h$} \\
\vect{\eta_i}(h) - \vect{\eta_i}^* &= e^{- \frac{1}{n} R_iR_i^\top h} (\vect{\eta_i}(0) - \vect{\eta_i}^*) \\ 
&\text{while } \vect{\eta_i}^* = R_i^{-\top}\vect{y_i}, \vect{\eta_i}(0) = Q_i^\top \vect{\theta}_0 \\
\vect{\eta_i}(h) &= e^{- \frac{1}{n} R_iR_i^\top h} (Q_i^\top \vect{\theta}_0 - R_i^{-\top}\vect{y_i}) + R_i^{-\top}\vect{y_i} 
\end{align*}

Using \eqref{strang:lls_theorem_theta_from_eta} we obtain the target formula
\begin{equation*}
\begin{split}
\vect{\theta}(h) &= Q_i e^{-\frac{1}{n}R_iR_i^\top h} \left( Q_i^\top \vect{\theta}_0 - R_i^{-\top}\vect{y_i}\right) + \\ &+ Q_iR_i^{-\top}\vect{y_i} + (I - Q_iQ_i^\top)\vect{\theta}_0,
\end{split}
\end{equation*}

\end{proof}

\begin{lemma}\label{strang:lemexp}
Let $A = Q B Q^*,$ where $Q$ is an $N \times r$ matrix with orthonormal columns, and $B$ is an $r \times r$ matrix. Then,
\begin{equation}\label{strang:lrexp}
    e^{t A}  = (I - QQ^*) + Q e^{t B} Q^*.
\end{equation}
\end{lemma}
To prove \eqref{strang:lrexp} we note that
\begin{equation*}
\begin{split}
e^{t A} &= \sum_{k=0}^{\infty} \frac{t^k A^k}{k!} = \sum_{k=0}^{\infty} \frac{t^k Q B^k Q^*}{k!} = \\ &= I - QQ^* + QQ^* + Q \sum_{k=1}^{\infty} \frac{t^k B^k}{k!} Q^* = \\ &= (I - QQ^*) + Q e^{t B} Q^*.
\end{split}
\end{equation*}

\begin{lemma}
    \label{strang:lemupper_2}
    Let $A_1, A_2 \in \mathbb{S}^p_{+}$ be the square negative semidefinite matrices, that don't have full rank, i.e. $\operatorname{rank}{A_1} \leq p$ and $\operatorname{rank}{A_2} \leq p$. While the sum of those matrices has full rank, i.e. $A = A_1 + A_2, \operatorname{rank}{A} = p$. Then, the global upper bound error will be written as follows:

    \begin{equation}\label{strang:lemupper}
        \lim_{t \to \infty}\| e^{A_2t}e^{A_1t} - e^{At}\| = \|(I - Q_2Q_2^*)(I - Q_1Q_1^*)\|
    \end{equation}
\end{lemma}
\begin{proof}
    The proof is straightforward. We will use the low rank matrix exponential decomposition from the Lemma \ref{strang:lemexp}
    $$
    e^{A_it} = \Pi_i + Q_i e^{B_it} Q_i^*, \text{where } \Pi_i = I - Q_iQ_i^*; i = 1,2
    $$
    \begin{align*}
    &\lim_{t \to \infty}\| e^{A_2t}e^{A_1t} - e^{At}\| = \\ 
    &= \lim_{t \to \infty}\| (\Pi_2 + Q_2 e^{B_2t} Q_2^*)(\Pi_1 + Q_1 e^{B_1t} Q_1^*) - e^{At}\| = \\
    &= \lim_{t \to \infty}\| \Pi_2\Pi_1 + Q_1 e^{B_1t} Q_1^*\Pi_2 + \Pi_1Q_2 e^{B_2t} Q_2^* + \\
    &+  Q_1 e^{B_1t} Q_1^* Q_2 e^{B_2t} Q_2^* - e^{At}\| =\\&= \Pi_2 \Pi_1
    \end{align*}
    Since all matrices $B_1, B_2, A$ are negative all the matrix exponentials are decaying: $\|e^{At}\|\leq e^{t\mu (A)}\, \forall t\geq 0$, where $\mu(A) = \lambda_{max} \left( \frac{A + A^\top}{2}\right)$ - the logarithmic norm.
\end{proof}


\section{Applications}
\subsection{Linear least squares}
\subsubsection{Problem}
Let $f_i(\vect{\theta}) = \Vert \vect{x_i}^{\top} \vect{\theta} - y_i \Vert^2$, then problem \eqref{strang:finitesum} is the linear least squares problem, which can be written as
\begin{equation}\label{strang:LLS}
   f(\vect{\theta}) = \frac{1}{n}\Vert X \vect{\theta} - \vect{y} \Vert_2^2  = \frac{1}{n}\sum\limits_{i=1}^s\Vert X_i \vect{\theta} - \vect{y_i} \Vert_2^2\to \min_{\vect{\theta} \in \mathbb{R}^p},
\end{equation}
where $X \in \mathbb{R}^{n \times p}$ and $\vect{y} \in \mathbb{R}^p$ and the second part of the equation stands for $s$ mini-batches with size $b$ regrouping ($b \cdot s = n$): $X_i \in \mathbb{R}^{b \times p}, \vect{y_i} \in \mathbb{R}^{b}$

\begin{equation}\label{strang:LLS_grad}
\nabla_\theta f(\vect{\theta}) = \nabla f(\vect{\theta}) = \frac{1}{n}\sum\limits_{i=1}^s X_i^\top(X_i \vect{\theta} - \vect{y_i})
\end{equation}

The gradient flow equation will be written as follows:
\begin{equation}\label{strang:LLS_GF}
\frac{d \vect{\theta}}{d t} = - \frac{1}{n}\sum\limits_{i=1}^s X_i^\top( X_i \vect{\theta} - \vect{y_i})
\end{equation}

\subsubsection{Exact solution of the local problem}
Theorem \ref{strang:LLS_local_solution} gives us explicit formula for the local solution:
$$
\vect{\theta}(h) = Q_i e^{-\frac{1}{n}R_iR_i^\top h} \left( Q_i^\top \vect{\theta}_0 - R_i^{-\top}\vect{y_i}\right) + Q_iR_i^{-\top}\vect{y_i} + (I - Q_iQ_i^\top)\vect{\theta}_0
$$

\subsubsection{Kaczmarz as the limit case of splitting}
Kaczmarz method \cite{kaczmarz1937method}, \cite{strohmer2009randomized}, \cite{gower2015randomized} is a well-known iterative algorithm for solving linear systems
It is interesting to mention, that splitting approach immediately leads to the Kaczmarz method for solving linear system in the same setting with unit batch size.

When the batch size is equal to one, we need to do $n$ QR decompositions for each transposed batch matrix, which is just column vector $\vect{x_i}$ in our case:

\begin{equation}
\vect{x_i} = \vect{q_i} \vect{r_i} = \underset{\vect{q_i} }{\frac{\vect{x_i}}{\|\vect{x_i}\|}} \underset{\vect{r_i}}{\vphantom{\frac{\vect{x_i}}{\|\vect{x_i}\|}} \|\vect{x_i}\|}
\end{equation}

Now, we need to use \eqref{strang:LLS_local_solution_formula} to derive analytic local solution in that case:

\begin{equation*}
\begin{split}
\vect{\theta}(h) &= \frac{\vect{x_i}}{\|\vect{x_i}\|} e^{-\frac{\|\vect{x_i}\|^2 h}{n}} \left( \frac{\vect{x_i}^\top}{\|\vect{x_i}\|} \vect{\theta}_0 - \frac{y_i}{\|\vect{x_i}\|}\right) + \\ &+ \frac{\vect{x_i}}{\|\vect{x_i}\|^2}y_i + \left(I - \frac{\vect{x_i}\vect{x_i}^\top}{\|\vect{x_i}\|^2}\right)\vect{\theta}_0 = \\
&= \frac{\left(y_i -\vect{x_i}^\top\vect{\theta}_0 \right)}{\|\vect{x_i}\|^2}  \left(1 - e^{-\frac{\|\vect{x_i}\|^2 h}{n}}\right)\vect{x_i} + \vect{\theta}_0
\end{split}
\end{equation*}

It can be easily seen, that:

\begin{equation}
\label{strang:splitting_limit_kaczmarz}
\lim_{h \to \infty} \vect{\theta}(h) = \frac{\left(y_i - \vect{x_i}^\top\vect{\theta}_0 \right)}{\|\vect{x_i}\|^2} \vect{x_i} + \vect{\theta}_0,
\end{equation}
which is exact formula for Kaczmarz method for solving linear system. This result correlates with the statements of \cite{needell2014stochastic}, but provides us with a new sense of similarity between SGD and Kaczmarz method.

\subsection{Binary logistic regression}
\subsubsection{Problem}
In this classification task then problem \eqref{strang:finitesum} takes the following form:
\begin{equation}\label{strang:LogReg}
-\frac{1}{n} \sum_{i=1}^n\left(y_i \ln \sigma(\vect{\theta}^\top\vect{x_i})  + (1-y_i) \ln (1-\sigma(\vect{\theta}^\top\vect{x_i}))\right) \to \min_{\vect{\theta} \in \mathbb{R}^p},
\end{equation}
where $\sigma(x) = \frac{1}{1 + e^{-x}}$ is the sigmoid function, while $ y_i \in \{0,1\}$ stands for the label of the object class.

\begin{equation}\label{strang:LogReg_grad}
\nabla_\theta f(\vect{\theta}) = \nabla f(\vect{\theta}) = \frac{1}{n}\sum\limits_{i=1}^n \vect{x_i}(\sigma(\vect{\theta}^\top\vect{x_i}) - y_i)
\end{equation}

The gradient flow equation will be written as follows:
\begin{equation}\label{strang:LogReg_GF}
\frac{d \vect{\theta}}{d t} = - \frac{1}{n}\sum\limits_{i=1}^n \vect{x_i}(\sigma(\vect{\theta}^\top\vect{x_i}) - y_i)
\end{equation}

Our particular interest lies in mini-batch reformulation of the given problem. We consider $s$ mini-batches with size $b$ regrouping ($b \cdot s = n$): $X_i \in \mathbb{R}^{b \times p}, \vect{y_i} \in \mathbb{R}^{b}$ and  $\sigma(\vect{x})$ stands for the element-wise sigmoid function.

\begin{equation}\label{strang:LogReg_GF_batch}
\frac{d \vect{\theta}}{d t} = - \frac{1}{n}\sum\limits_{i=1}^s X_i^\top\left(\sigma\left(X_i\vect{\theta}\right) - \vect{y_i}\right)
\end{equation}

\subsubsection{Splitting scheme and local problem}

Since we are applying splitting scheme to find the approximate solution of the \eqref{strang:LogReg_GF_batch}, each local problem should be written as follows:

\begin{equation}\label{strang:LogReg_GF_local}
\frac{d \vect{\theta}}{d t} =  - \frac{1}{n} X_i^\top\left(\sigma\left(X_i\vect{\theta}\right) - \vect{y_i}\right)
\end{equation}

Note, that this is not linear equation and cannot be solved as easy as in Theorem \ref{strang:LLS_local_solution}. However, we can apply the same technique to reduce the dimension of ODE, which is needed to be solved numerically. 

Suppose, we have $QR$ decomposition of each batch data matrix $X_i^\top = Q_i R_i$, then we can multiply both sides of \eqref{strang:LogReg_GF_local} on the $(I - Q_iQ_i^\top)$ on the left. 

\begin{align}
\nonumber (I - Q_iQ_i^\top)\frac{d \vect{\theta}}{d t} &= (I - Q_iQ_i^\top) \frac{1}{n} X_i^\top(\vect{y_i} -\sigma\left(X_i\vect{\theta}\right)) \\
\nonumber \frac{d \vect{\theta}}{d t} &= Q_i\frac{d (Q_i^\top\vect{\theta})}{d t} \quad Q_i^\top \vect{\theta} = \vect{\eta_i}\\ 
\nonumber \frac{d \vect{\theta}}{d t} &= Q_i\frac{d \vect{\eta_i}}{d t} \quad \text{integrate from $0$ to $h$}\\ 
\label{strang:logreg_theta_from_eta}\vect{\theta}(h) &= Q_i \left(\vect{\eta_i}(h) - \vect{\eta_i}(0) \right) + \vect{\theta}_0
\end{align}

On the other hand:
\begin{align}\nonumber
\frac{d \vect{\eta_i}}{d t} &= Q_i^\top\frac{d \vect{\theta}}{d t} =  - \frac{1}{n} Q_i^\top  X_i^\top(\sigma\left(X_i\vect{\theta}\right) - \vect{y_i}) = \\ 
\nonumber&= - \frac{1}{n} Q_i^\top  Q_i R_i(\sigma\left(X_i\vect{\theta}\right) - \vect{y_i}) =\\
\nonumber&= - \frac{1}{n} R_i(\sigma\left(X_i\vect{\theta}\right) - \vect{y_i})
\end{align}

Recall, that each hypothesis function depends on linear function $\vect{x_i}^\top \vect{\theta}$, which means, that in batch reformulation it is just entries of the vector $X_i \vect{\theta}$. Since we have $QR$ decomposition of $X_i^\top$, we can write: $X_i \vect{\theta} = R_i^\top Q_i^\top \vect{\theta} = R_i^\top \vect{\eta_i}$. In other words:

\begin{equation}
\label{strang:logreg_eta_ode}
\frac{d \vect{\eta_i}}{d t} = - \frac{1}{n} R_i\left(\sigma\left(R_i^\top \vect{\eta_i}\right) - \vect{y_i}\right),
\end{equation}

To sum it up, we need to solve \eqref{strang:logreg_eta_ode} (which is much simpler, than original differential equation \eqref{strang:LogReg_GF_local}), than substitute it to the
\eqref{strang:logreg_theta_from_eta} with $\vect{\eta_i}(0) = Q_i^\top \vect{\theta}_0$. Note, that matrices $Q_i$ and $R_i$ can be computed only once before the training.

\subsection{Softmax Regression}
\subsubsection{Problem}
In this classification task then problem \eqref{strang:finitesum} takes the following form:
\begin{equation}\label{strang:Softmax}
-\frac{1}{n} \sum_{i=1}^n\log\left(\frac{\vect{y_i}^\top e^{\Theta^\top \vect{x_i}}}{\vect{1}^\top e^{\Theta^\top \vect{x_i}}}\right) \to \min_{\Theta \in \mathbb{R}^{p \times K}},
\end{equation}
where $e^{\vect{x}}$ is element-wise exponential function, while $ \vect{y_i} \in \mathbb{R}^K$ stands for the one-hot encoding of the $i$-th object label.

\begin{equation}
\nabla_\Theta f(\Theta) = -\frac{1}{n} \sum_{i=1}^n\vect{x_i}\left(\vect{y_i} - \frac{ e^{\Theta^\top \vect{x_i}}}{\vect{1}^\top e^{\Theta^\top \vect{x_i}}}\right)^\top
\end{equation}

\begin{equation}
\nabla_\Theta f(\Theta) = -\frac{1}{n} \sum_{i=1}^n\vect{x_i}\left(\vect{y_i} - s\left(\Theta^\top \vect{x_i}\right)\right)^\top
\end{equation}


Here we use $s(\vect{x})$ as a softmax function of a vector $\vect{x}$, i.e. $s(\vect{x}) = \frac{e^{\vect{x}}}{\vect{1}^\top e^{\vect{x}}}$ .While mini-batch reformulation will take the following form:

\begin{equation}
\nabla_\Theta f(\Theta) = -\frac{1}{n} \sum_{i=1}^s X_i^\top\left(Y_i - s(\Theta^\top X_i^\top)\right)^\top,
\end{equation}

where $s(X) = \left[\begin{array}{cccc}| & | & | & | \\
s(\vect{x}_{(1)}) & s(\vect{x}_{(2)}) & \cdots & s(\vect{x}_{(b)}) \\
| & | & | & |
\end{array}\right]$ is a column-wise softmax function. Indeed, in a very similar manner to the binary logistic regression we can write down gradientflow ODE for softmax regression in a mini-batch form:

\begin{equation}
\frac{d \Theta}{d t} = - \frac{1}{n} \sum_{i=1}^s X_i^\top\left(s(\Theta^\top X_i^\top) - Y_i \right)^\top
\end{equation}

Splitting method requires the local problem, which is focused on a single minibatch:

\begin{equation}
\frac{d \Theta}{d t} = - \frac{1}{n} X_i^\top\left(s(\Theta^\top X_i^\top) - Y_i \right)^\top
\end{equation}

\begin{align}
\nonumber (I - Q_iQ_i^\top)\frac{d \Theta}{d t} &= (I - Q_iQ_i^\top) \frac{1}{n} X_i^\top(Y_i -s(\Theta^\top X_i^\top))^\top \\
\nonumber \frac{d \Theta}{d t} &= Q_i\frac{d (Q_i^\top\Theta)}{d t} \quad Q_i^\top \Theta = H_i \\ 
\nonumber \frac{d \Theta}{d t} &= Q_i\frac{d H_i }{d t} \quad \text{integrate from $0$ to $h$}\\ 
\label{strang:softmax_theta_from_eta}\Theta(h) &= Q_i \left(H_i (h) - H_i (0) \right) + \Theta_0
\end{align}

On the other hand:
\begin{align}\nonumber
\frac{d H_i }{d t} &= Q_i^\top\frac{d \Theta}{d t} =  - \frac{1}{n} Q_i^\top  X_i^\top(s(\Theta^\top X_i^\top) - Y_i)^\top = \\ 
\nonumber&= - \frac{1}{n} Q_i^\top  Q_i R_i(s(\Theta^\top X_i^\top) - Y_i)^\top =\\
\nonumber&= - \frac{1}{n} R_i(s(\Theta^\top X_i^\top) - Y_i)^\top =\\
\nonumber&= - \frac{1}{n} R_i(s(H_i^\top R) - Y_i)^\top 
\end{align}

Now we need to solve ODE of variable of the size $b \times k$, rather, than $p \times k$.

\end{document}